\newtheorem{lemma}{Lemma}
\begin{document}

\twocolumn[
\icmltitle{Adafactor: Adaptive Learning Rates with Sublinear Memory Cost}



\icmlsetsymbol{equal}{*}

\begin{icmlauthorlist}
\icmlauthor{Noam Shazeer}{google}
\icmlauthor{Mitchell Stern}{google,berkeley}
\end{icmlauthorlist}

\icmlaffiliation{google}{Google Brain, Mountain View, California, USA}
\icmlaffiliation{berkeley}{University of California, Berkeley, California, USA}

\icmlcorrespondingauthor{Noam Shazeer}{noam@google.com} 

\icmlkeywords{Machine Learning, ICML}

\vskip 0.3in
]



\printAffiliationsAndNotice{}  

\begin{abstract}
In several recently proposed stochastic optimization methods (e.g.\ RMSProp, Adam, Adadelta), parameter updates are scaled by the inverse square roots of exponential moving averages of squared past gradients. Maintaining these per-parameter second-moment estimators requires memory equal to the number of parameters.  For the case of neural network weight matrices, we propose maintaining only the per-row and per-column sums of these moving averages, and estimating the per-parameter second moments based on these sums. We demonstrate empirically that this method produces similar results to the baseline. Secondly, we show that adaptive methods can produce larger-than-desired updates when the decay rate of the second moment accumulator is too slow. We propose update clipping and a gradually increasing decay rate scheme as remedies. Combining these methods and dropping momentum, we achieve comparable results to the published Adam regime in training the Transformer model on the WMT 2014 English-German machine translation task, while using very little auxiliary storage in the optimizer.  Finally, we propose scaling the parameter updates based on the scale of the parameters themselves. 

\end{abstract}


\section{Introduction and Background}
\label{introduction}

Gradient-based optimization forms the backbone of most modern approaches used to train deep neural networks. One of the simplest methods is stochastic gradient descent (SGD), wherein steps are taken along the direction of the negative gradient of the loss function evaluated on a minibatch. Building on this foundation, a variety of adaptive gradient-based methods have been proposed in which the gradient is divided by the componentwise square root of a vector summarizing the history of squared gradients, usually obtained through summation as in Adagrad \cite{Duchi11Adaptive} or exponential averaging as in RMSProp \cite{Tieleman12RmsProp}, Adam \cite{Kingma14Adam}, and Adadelta \cite{Zeiler12Adadelta}. On convex problems, several of these methods offer theoretical advantages over SGD when gradients are sparse. While convergence guarantees have not yet been provided in the dense, non-convex setting in which most neural network training takes place, practitioners have nevertheless found these methods to empirically outperform SGD across a variety of domains.

The superior performance of these methods does come at a cost. Recent improvements in the \emph{computational capacity} needed to train neural networks with larger numbers of parameters have far outstripped improvements in the \emph{memory capacity} required to store those parameters during training. This has led to memory usage becoming an important constraint on model size. Adaptive optimization algorithms exacerbate this problem by requiring additional memory for extra accumulators, such as those required for momentum and per-coordinate gradient scaling. For example, Adam \cite{Kingma14Adam} keeps two additional values for each parameter, tripling the memory requirements.

We propose a way to reduce memory usage while retaining the empirical benefits of adaptivity by maintaining a factored representation of the squared gradient accumulator across training steps. Specifically, by tracking moving averages of the row and column sums of the squared gradients for matrix-valued variables, we are able to reconstruct a low-rank approximation of the exponentially smoothed accumulator at each training step that is optimal with respect to the generalized Kullback-Leibler divergence. For an $n \times m$ matrix, this reduces the memory requirements from $O(n m)$ to $O(n + m)$. We demonstrate empirically using Adam on a large-scale machine translation task known for its expensive models that our approach achieves comparable performance to that obtained using full accumulators.


Beyond this, we also investigate another issue related to Adam of recent interest. To further reduce memory requirements, we would like to run Adam without momentum, eliminating an additional auxiliary value per model parameter.  But without making any other changes, eliminating momentum can cause training instability.  We identify out-of-date second moment accumulators as a possible cause of this instability and propose two remedies.

Finally, while the learning rate in Adam denotes a target absolute step size, we follow the intuition that relative change in the parameters is more relevant, so we propose scaling the size of the updates relative to the scale of the parameters themselves.


\section{A Brief Review of Adam}

\begin{algorithm}
\begin{algorithmic}[1]
\STATE \textbf{Inputs:} initial point $x_0$, step sizes $\{\alpha_t\}_{t=1}^T$, first moment decay $\beta_1$, second moment decay $\beta_2$, regularization constant $\epsilon$
\STATE Initialize $m_0 = 0$ and $v_0 = 0$
\FOR{$t = 1$ \TO $T$}
  \STATE $g_t = \nabla f_t(x_{t-1})$
  \STATE $m_t = \beta_1 m_{t-1} + (1 - \beta_1) g_t$
  \STATE $v_t = \beta_2 v_{t-1} + (1 - \beta_2) g_t^2$
  \STATE $\hat{m}_t = m_t / (1 - \beta_1^t)$
  \STATE $\hat{v}_t = v_t / (1 - \beta_2^t)$
  \STATE $x_t = x_{t-1} - \alpha_t \hat{m}_t / (\sqrt{\hat{v}_t} + \epsilon)$
\ENDFOR
\end{algorithmic}
\caption{Adam \cite{Kingma14Adam}}
\label{alg:adam}
\end{algorithm}

We reproduce the pseudocode for the Adam optimizer in Algorithm~\ref{alg:adam} for reference \cite{Kingma14Adam}. The setup of the problem is as follows. Suppose we are trying to minimize the expected value of a noisy objective function $f(x)$. At each step, we receive a stochastic realization $f_t$, e.g.\ the loss computed on a random minibatch of data, and we compute the gradient $g_t$ of this function with respect to our previous parameters. We then update the exponential running averages of the first and second moments of the gradient $m_t$ and $v_t$, compute bias-corrected versions $\hat{m}_t$ and $\hat{v}_t$ to account for the zero initialization, and finally make a parameter update to obtain a new iterate $x_t$. This repeats for $T$ steps, at which point we return the final iterate $x_T$ as our approximate solution.

The step size $\alpha_t$ is often held constant over the course of training, but recent work in large-scale optimization suggests that performance can be improved on some problems through a linear ramp-up followed by some form of decay \cite{Goyal17Accurate,Vaswani17Attention}. We use the latter with an inverse square root decay scheme in our experiments, finding it to yield more stable results.




\section{Factored Second Moment Estimation}
\label{sec:fsme}

Recent work has shown that for problems where vast quantities of data are available, e.g.\ language modeling and machine translation, task performance improves consistently as model size increases, even in the regime of models with several billions of parameters \cite{Shazeer17Outrageously}. As models continue to grow, the storage requirements of one or two auxiliary parameters per model parameter imposed by existing adaptive methods can be prohibitive, motivating the investigation of a low-memory alternative. In this section, we propose a novel approach in which model structure is exploited in order to reduce storage requirements without compromising empirical performance.

Suppose a subset of the model's parameters are arranged in a matrix, e.g.\ for use in a linear transformation. We denote this subset by $W \subseteq x$ with $W \in \mathbb{R}^{n \times m}$. Under standard practice, we would need to maintain an exponential moving average $V \in \mathbb{R}^{n \times m}$ of the corresponding square gradients $(\nabla_W f(x))^2$ for use in an adaptive update rule.

In cases where storing the full moving average is infeasible, we might instead seek to store moving averages of some low-rank matrices $R \in \mathbb{R}^{n \times k}$ and $S \in \mathbb{R}^{k \times m}$ with $k \ll n, m$ such that $V \approx R S$ at each step. We note that in general, moving averages of instantaneous factors of $V$ may differ from instantaneous factors of the moving average, so standard techniques for low-rank approximation may not necessarily be applicable. We would also like these quantities to be fast to compute so that the approximation step does not become a bottleneck in the overall training procedure.

One common choice for low-rank approximation is to truncate the singular value decomposition at the top $k$ singular values. This is known to give the optimal projection onto the space of rank-$k$ matrices with respect to the Frobenius norm \cite{Eckart36Approximation}. While heavily tuned procedures exist for finding the top $k$ singular values and vectors of a matrix, these quantities in general do not decompose over matrix addition, implying an incompatibility with exponential smoothing. Moreover, there is no guarantee that the entries of the approximation will be nonnegative, which is problematic given that we would like to scale the gradient by the componentwise inverse square root.

In search of a more suitable alternative, we turn to techniques from nonnegative matrix factorization. In addition to the Frobenius norm, another popular cost function in the literature is the generalized Kullback-Leibler divergence, also known as the I-divergence \cite{Lee99Learning}. For nonnegative scalar inputs, the I-divergence is given by the equation $$d(p, q) = p \log \frac{p}{q} - p + q,$$ with the conventions that $0/0 = 0$, $0 \log 0 = 0$, and $p / 0 = \infty$ for $p > 0$. It is easily seen that $d(p, q) \ge 0$ with equality iff $p = q$ by setting $x = p/q$ in the standard inequality $x \log x \ge x - 1$. Under this cost function, we aim to minimize the total elementwise divergence subject to componentwise nonnegativity constraints:
\begin{align}
\begin{split}
\underset{R \in \mathbb{R}^{n \times k}, S \in \mathbb{R}^{k \times m}}{\text{minimize}} \quad & \sum_{i=1}^n \sum_{j=1}^m d(V_{ij}, [RS]_{ij}) \\
\text{subject to} \quad & R_{ij} \ge 0, S_{ij} \ge 0.
\end{split}
\label{eq:factorization-problem}
\end{align}
Solving this problem for general rank-$k$ factors is nontrivial, requiring for instance the use of an alternating minimization procedure \cite{Finesso06Nonnegative}. In the special case of rank-1 factors, however, we can derive an analytic solution.

\begin{lemma}
The solution set of the optimization problem \eqref{eq:factorization-problem} when $k = 1$ consists of all feasible pairs $(R, S)$ satisfying $R S = V 1_m 1_n^\top V / 1_n^\top V 1_m$, where $1_\ell = (1, \dots, 1) \in \mathbb{R}^\ell$ denotes a column vector of $\ell$ ones.
\end{lemma}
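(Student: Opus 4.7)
The plan is to locate the critical point via first-order conditions and then verify global optimality by rewriting the objective in terms of KL divergences. For $R \in \mathbb{R}_{\ge 0}^{n \times 1}$ and $S \in \mathbb{R}_{\ge 0}^{1 \times m}$, I would first work in the interior and set $\partial/\partial R_i$ and $\partial/\partial S_j$ of $L(R,S) := \sum_{ij} d(V_{ij}, R_i S_j)$ to zero. Using $d(p,q) = p\log p - p\log q - p + q$, this gives the symmetric system
\begin{align*}
R_i \sum_{j'} S_{j'} &= (V 1_m)_i, \\
S_j \sum_{i'} R_{i'} &= (1_n^\top V)_j.
\end{align*}
Writing $\alpha = 1_n^\top R$, $\beta = S 1_m$, and $Z = 1_n^\top V 1_m$, summing either set of equations yields $\alpha\beta = Z$, so that $R_i S_j = (V 1_m)_i (1_n^\top V)_j / Z$, which is precisely the claimed identity.

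To upgrade this critical point to a global minimum, I would reparametrize $R = \alpha u$ and $S = \beta v^\top$ with $u, v$ probability vectors, substitute into $L$, and use $\sum_i (V 1_m)_i = \sum_j (1_n^\top V)_j = Z$ to obtain the decomposition
\begin{align*}
L(R,S) = Z\, D_{KL}(p_a \,\|\, u) + Z\, D_{KL}(p_b \,\|\, v) + \bigl(\alpha\beta - Z\log(\alpha\beta)\bigr) + C,
\end{align*}
where $p_a = V 1_m / Z$, $p_b = 1_n^\top V / Z$, and $C$ collects terms independent of $(R,S)$. Each of the first three terms is separately bounded below and uniquely minimized at $u = p_a$, $v = p_b$, and $\alpha\beta = Z$, respectively; the critical point above attains all three minima simultaneously, proving global optimality. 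Since these conditions pin down only the product $R_i S_j = \alpha\beta\, u_i v_j = (V 1_m)_i(1_n^\top V)_j/Z$ and not the individual scales, the solution set is exactly the collection of feasible pairs with this product, matching the statement of the lemma.

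The main obstacle I anticipate is boundary behavior under the stated I-divergence conventions. If $(V 1_m)_i = 0$ for some $i$, the first-order equation forces $R_i = 0$, which lies on the boundary of the feasible region; the conventions $0\log 0 = 0$ and $p/0 = \infty$ for $p>0$ make this consistent by ensuring row $i$ contributes $0$ to $L$ precisely when $R_i = 0$. I would handle this by restricting the KL decomposition to the support of $p_a$ (respectively $p_b$), and I would treat the fully degenerate case $Z = 0$ separately: nonnegativity of $V$ forces $V = 0$ entrywise, the objective collapses to $\alpha\beta$, and any $(R,S)$ with $RS = 0$ is optimal, consistent with the lemma under the $0/0 = 0$ convention. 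Once these edge cases are dispatched, the lemma follows from the first-order system and the KL decomposition in a few lines.
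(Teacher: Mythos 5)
Your proposal is correct, and its first half---the stationarity system $R_i\sum_j S_j = (V1_m)_i$, $S_j\sum_i R_i = (1_n^\top V)_j$, summed to get $\alpha\beta = Z$ and hence $R_iS_j = (V1_m)_i(1_n^\top V)_j/Z$---is exactly the computation in the paper's proof, which then breaks the scale symmetry by fixing $\sum_i R_i = Z$ to exhibit the canonical pair $R = V1_m$, $S = 1_n^\top V/Z$. Where you genuinely diverge is in the second half: the paper stops at the first-order conditions and the symmetry argument, leaving global optimality implicit, whereas your reparametrization $R = \alpha u$, $S = \beta v^\top$ and the decomposition
\begin{align*}
L(R,S) = Z\, D_{KL}(p_a \,\|\, u) + Z\, D_{KL}(p_b \,\|\, v) + \bigl(\alpha\beta - Z\log(\alpha\beta)\bigr) + C
\end{align*}
certifies that the critical point is the unique global minimizer of the product $RS$ (each of the three variable terms is separately minimized, and the three minima are simultaneously attainable). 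This buys rigor the paper does not supply: the objective is not jointly convex in $(R,S)$, so ``set the gradient to zero'' alone does not characterize the minimum without an additional argument (e.g., convexity in $\log R_i,\log S_j$, or your decomposition), and your explicit treatment of the boundary cases $(V1_m)_i = 0$ and $Z = 0$ likewise covers degeneracies the paper's interior calculus glosses over. The trade-off is length; the paper's version is shorter and suffices for the intended audience, while yours is the more complete proof.
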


\begin{proof}
Let $R$ and $S$ be any feasible solution. Noting that $[RS]_{ij} = R_i S_j$ and expanding the loss, we have
\begin{align*}
& \sum_{i=1}^n \sum_{j=1}^m d(V_{ij}, [RS]_{ij}) \\
= \; & \sum_{i=1}^n \sum_{j=1}^m \left( V_{ij} \log \frac{V_{ij}}{[RS]_{ij}} - V_{ij} + [RS]_{ij} \right) \\
= \; & \sum_{i=1}^n \sum_{j=1}^m V_{ij} \log V_{ij} - \sum_{i=1}^n \sum_{j=1}^m V_{ij} \log R_i \\
& - \sum_{i=1}^n \sum_{j=1}^m V_{ij} \log S_j - \sum_{i=1}^n \sum_{j=1}^m V_{ij} + \sum_{i=1}^n \sum_{j=1}^m R_i S_j .
\end{align*}
Setting the derivatives of this expression with respect to $R_i$ and $S_j$ equal to 0, we obtain the relations
\begin{align*}
- \sum_{j=1}^m \frac{V_{ij}}{R_i} + \sum_{j=1}^m S_j & = 0 \implies R_i = \frac{\sum_{j=1}^m V_{ij}}{\sum_{j=1}^m S_j} , \\
- \sum_{i=1}^n \frac{V_{ij}}{S_j} + \sum_{i=1}^n R_i & = 0 \implies S_j = \frac{\sum_{i=1}^n V_{ij}}{\sum_{i=1}^n R_i} .
\end{align*}
Now note that for any minimizer $(R, S)$, the solution $(\alpha R, S / \alpha)$ is also a minimizer for any $\alpha > 0$, since the loss only depends on the product $R S$. Hence we may break the symmetry by fixing the sum of the components of $R$ at $\sum_{i=1}^n R_i = \sum_{i=1}^n \sum_{j=1}^m V_{ij}$, in which case we obtain a canonical minimizer $$R_i = \sum_{j=1}^m V_{ij}, \quad S_j = \frac{\sum_{i=1}^n V_{ij}}{\sum_{i=1}^n \sum_{j=1}^m V_{ij}}$$ or in vector form, $$R = V 1_m, \quad S = \frac{1_n^\top V}{1_n^\top V 1_m}.$$ By our discussion of symmetry above, it follows that the solution set consists more broadly of all pairs $(R, S)$ satisfying $R S = V 1_m 1_n^\top V / 1_n^\top V 1_m$, and the claim follows.
\end{proof}

We now note some important properties of this rank-1 projection. First, if $V$ itself is a rank-1 matrix, then it will be exactly recovered as one would expect. Second, the projection can be expressed entirely in terms of the row sums $V 1_m$ and column sums $1_n^\top V$, which in particular are linear functions of $V$. This convenient fact gives us the desired compatibility with exponential smoothing, since the row sums of the moving average equal the moving average of the row sums, and similarly for columns. Moreover, storing only the moving averages of these factors rather than the full matrix $V$ yields considerable memory savings, requiring space proportional to $n + m$ rather than $n m$.

\begin{algorithm}[t]
\begin{algorithmic}[1]
\STATE \textbf{Inputs:} initial point $X_0 \in \mathbb{R}^{n \times m}$, step sizes $\{\alpha_t\}_{t=1}^T$, second moment decay $\beta_2$, regularization constant $\epsilon$
\STATE Initialize $R_0 = 0$ and $C_0 = 0$
\FOR{$t = 1$ \TO $T$}
  \STATE $G_t = \nabla f_t(X_{t-1})$
  \STATE $R_t = \beta_2 R_{t-1} + (1 - \beta_2) (G_t^2) 1_m$
  \STATE $C_t = \beta_2 C_{t-1} + (1 - \beta_2) 1_n^\top (G_t^2)$
  \STATE $\hat{V}_t = (R_t C_t / 1_n^\top R_t) / (1 - \beta_2^t)$
  \STATE $X_t = X_{t-1} - \alpha_t G_t / (\sqrt{\hat{V}_t} + \epsilon)$
\ENDFOR
\end{algorithmic}
\caption{Adam for a matrix parameter $X$ with factored second moments and first moment decay parameter $\beta_1 = 0$.}
\label{alg:adam-factored-matrix}
\end{algorithm}

We present a concrete implementation of Adam with factored second moment accumulators in Algorithm~\ref{alg:adam-factored-matrix} for the case where the parameter set $x$ can be viewed as a single matrix $X$. In the event that the parameter set is most suitably partitioned into multiple matrices (treating vectors and scalars as special cases), the steps can be performed in parallel for each matrix individually. We present the algorithm with $\beta_1$ fixed at 0 so as to focus our attention on the second moments. First moments can be included as in Adam without modification if desired.

In the implementation, we keep running averages of the row sums $R_t$ and column sums $C_t$ of the squared gradients. The full accumulator is then approximated as the outer product divided by the sum of all entries, $R_t C_t / 1_n^\top R_t$, and is subsequently scaled by the same bias correction factor as in Adam. We note that the normalization term in the denominator $1_n^\top R_t$ could equivalently be expressed as $C_t 1_m$, so the treatment of row sums and column sums is not asymmetric despite the surface form of the approximation.

\subsection{Relation to Prior Work}

A preliminary version of this method was briefly mentioned in Appendix D of \citet{Shazeer17Outrageously}.  Also, \citet{Gupta14} employ a similar technique, saving memory by averaging Adagrad accumulators across embedding vectors.

\subsection{Experiments}
We ran the Transformer model from \citet{Vaswani17Attention}, using Adam with and without our factored second moment estimation for optimization.  See Section \ref{sec:experiments} for more details on the experimental setup.  Results were similar in all tested cases.  See Table \ref{tab:results} (A) vs.\ (C) and (H) vs.\ (J).  

We also tried simplified estimation schemes where the second-moment estimators for matrices were approximated by either the row means or the column means (but not their outer product).  For this model, the results for the row-mean scheme were similar to baseline, but the results for the column mean scheme were much worse.  See Table \ref{tab:results} (D) and (E).  We suspect that these results are due to the model's use of a shared weight matrix used both to represent the token embeddings and to produce the output probabilities. Each row in this matrix corresponds to one token in the vocabulary. Rows associated with very frequent tokens tend to receive gradients of much larger magnitude than rows associated with very infrequent tokens.

\section{No Momentum}

Adam requires two persistent accumulators per parameter for the first and second moments of the gradients.  In Section~\ref{sec:fsme}, we reduced the memory requirements of the second-moment accumulator.  To remove the need for a first-moment accumulator, we simply turn momentum off by setting $\beta_1=0$.

\subsection{Experiments}

For a step size schedule similar to the one used in \citet{Vaswani17Attention}, which includes a warmup period, model quality is similar without and with momentum (BLEU = 23.6 vs.\ 23.4) -- see Table~\ref{tab:results} (A) vs.\ (B), second to last column.

Without the warmup period, the model without momentum becomes more unstable (BLEU = 0.1 vs.\ 23.1) -- see Table~\ref{tab:results} (A) vs.\ (B), last column.   We hypothesize that removing the momentum unmasks an underlying problem with the stability of Adam, which we will discuss in the next section.




\section{A Problem with Adam: Out-of-Date Second Moment Estimator}
\label{sec:msupu}

\citet{Sashank18Convergence} discuss non-convergence issues when using a fast decay of the second-moment estimator (low $\beta_2$).  We observe the same issues in our experiments -- see Table~\ref{tab:slowfast}, first result column.  On the other hand, slow decay (high $\beta_2$) causes training instability when we turn off the step size warmup -- see Table~\ref{tab:slowfast}, second result column.

We explain the instability as follows: A slow decay rate means that our second-moment estimator is based on gradients farther in the past.  If the model is evolving rapidly, this could cause the estimates to have high error, leading to smaller-than-desired or (worse) larger-than-desired updates.  To check whether this is happening, we observe the root-mean-square over all parameters $x$ in a weight matrix or vector $X$ for a given timestep $t$ of the unscaled parameter update $u_{xt} = -g_{xt} / \sqrt{\hat{v}_{xt}}$.  For brevity, we refer to this quantity as $\mathrm{RMS}(U_t)$:
\begin{equation*} 
\begin{split}
\mathrm{RMS}(U_t) = \mathrm{RMS}_{x \in X}(u_{xt}) = \sqrt{\mathrm{Mean}_{x \in X}\left(\frac{(g_{xt})^2}{\hat{v}_{xt}}\right)} .
\end{split}
\end{equation*}
If Adam is functioning as intended, for each individual parameter $x$, the value $\hat{v}_{xt}$ should be close to $(g_{xt})^2$, since this is precisely what $\hat{v}_{xt}$ is designed to measure.  Thus, the ratio $(g_{xt})^2 / \hat{v}_{xt}$ should be close to 1, as should the mean of many such values.  So for a large weight matrix $X$, a value of $\mathrm{RMS}(U_t)$ which is far from 1 is a sign that the second-moment estimator is not doing its job well.

\begin{table}[t]
\caption{BLEU scores for Transformer machine translation models trained with slow ($\beta_2=0.999$) and fast ($\beta_2=0.9$) second-moment decay, with and without step size warm-up. Fast decay has convergence problems. Slow decay has stability problems.  Excerpted from Table \ref{tab:results} rows (A), (G).}
\label{tab:slowfast}
\vskip 0.15in
\begin{center}
\begin{small}
\begin{tabular}{c|cc}
\toprule
$\beta_2$ & With warm-up & No warm-up \\
\midrule
0.999  & \textbf{25.6} &  0.1\\
0.9  & 18.4  &  15.6 \\
\bottomrule
\end{tabular}
\end{small}
\end{center}
\vskip -0.1in
\end{table}

In Figure \ref{fig:rmsu}, we plot $\mathrm{RMS}(U_t)$ for one particular weight matrix in a Transformer machine translation model \cite{Vaswani17Attention} for training runs with $\beta_2=0.9$ and $\beta_2=0.999$.  With fast decay (red), $\mathrm{RMS}(U_t)$ stays close to $1$ as expected, while with slow decay (blue), it fluctuates significantly.  Values larger than 1 indicate larger-than-desired parameter updates.

The fact that slow decay causes both larger-than-desired updates and training instability supports our hypothesis that the large updates are the cause of the instability, but does not prove it.  One competing hypothesis is that the instability causes the larger-than-desired updates.  We refute this particular competing hypothesis by noting that the $RMS(U_t)$ values plotted in Figure \ref{fig:rmsu} are for training runs with step size warmup, neither of which exhibited instability.  In the next section, we further support our hypothesis by showing that we can cure the instability by clipping the larger-than-desired updates.


\begin{figure}[ht]
\centering
\centerline{\includegraphics[width=\columnwidth,trim={0 0 0 1.4cm},clip]{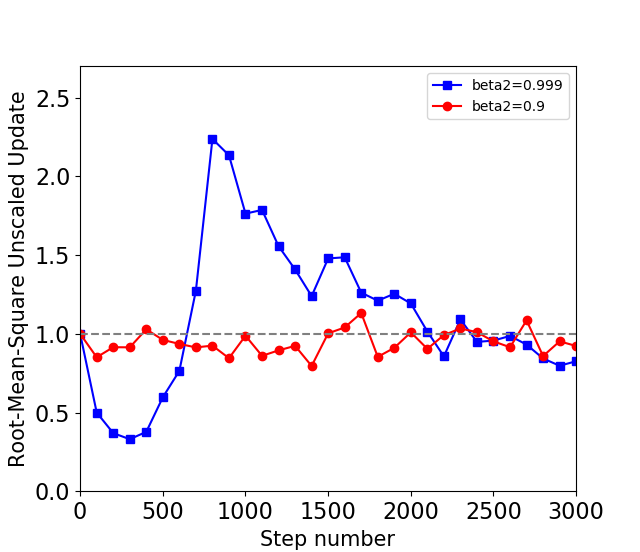}}
\caption{With slow decay $(\beta_2=0.999)$, the second-moment estimator is out of date, leading to parameter updates that are larger or smaller than the intended value.}
\label{fig:rmsu}
\end{figure}

\section{Update Clipping}
To remove the larger-than-desired updates described in Section \ref{sec:msupu}, we propose scaling down the updates on a weight vector or matrix $X$ whenever $\mathrm{RMS}(U_t)$ exceeds a threshold value $d$.  We define the clipped unscaled update $\hat{U}_t$ as:
\begin{equation*} \label{eq:clip}
\hat{U}_t = \frac{U_t}{\max\left(1, \mathrm{RMS}(U_t)/d\right)}
\end{equation*}
The actual parameter update is then the product $\alpha_t \hat{U}_t$ of the step size and the clipped unscaled update, as in Algorithm~\ref{alg:adafactor-matrix}.

\subsection{Comparison to Gradient Clipping}
Gradient clipping is a popular heuristic used for training neural networks in which the gradient is scaled down before an update if needed to ensure that its norm never exceeds some fixed threshold \cite{Pascanu13Difficulty}. For stochastic gradient descent, the update direction is exactly the gradient, so this also has the effect of putting an upper bound on the distance traveled in each step.  While gradient clipping is also applied to adaptive methods in practice, the norm of the update direction may still exceed the user-imposed threshold due to the presence of additional per-parameter scaling factors.  In update clipping, we cap the norm of the actual update rather than just the gradient.

\subsection{Experiments}
We added update clipping to the previously described fast-decay experiments.  For the experiment without learning rate warmup, update clipping with $d=1$ significantly ameliorated the instability problem -- see Table \ref{tab:results} (A) vs.\ (H).  With $d=2$, the instability was not improved.  Update clipping did not significantly affect the experiments with warmup (with no instability problems).





\section{Increasing Decay Parameter}

An alternative solution to the problems described in Section~\ref{sec:msupu} is to use an increasing schedule of $\beta_2$, as proposed by \citet{Sashank18Convergence}.  Perhaps this can give us the best of both worlds -- see Table \ref{tab:slowfast}, where different decay rates are better in different situations.

\subsection{In Adam}

We point out here that Adam already uses an increasing decay parameter if we rewrite the bias correction as a correction to $\beta_2$.  To do this, we define $\hat{\beta_2}_t = \beta_2 \frac{1 - \beta_2^{t-1}}{1 - \beta_2^t}$, and we compute $\hat{v}_t$ directly in terms of $\hat{v}_{t-1}$ as follows:
\begin{align*}
\hat{v}_t & = \frac{v_t}{1 - \beta_2^t} = \frac{\beta_2 v_{t-1} + (1 - \beta_2) g_t^2}{1 - \beta_2^t} \\
& = \frac{\beta_2(1 - \beta_2^{t-1})}{1 - \beta_2^t}\hat{v}_{t-1}  + \frac{1 - \beta_2}{1 - \beta_2^t} g_t^2  \\
& = \hat{\beta_2}_t\hat{v}_{t-1}  + \frac{(1 - \beta_2^t) - (\beta_2 - \beta_2^t)}{1 - \beta_2^t} g_t^2  \\
& = \hat{\beta_2}_t\hat{v}_{t-1}  + \left(1 - \frac{\beta_2(1 - \beta_2^{t-1})}{1 - \beta_2^t}\right) g_t^2  \\
& = \hat{\beta_2}_t \hat{v}_{t-1} + (1 - \hat{\beta_2}_t) g_t^2  .
\end{align*}
This, along with similar logic for $\beta_1$, leads to the alternative formulation of Adam in Algorithm~\ref{alg:adam_equivalent}.

\begin{algorithm}
\begin{algorithmic}[1]
\STATE \textbf{Inputs:} initial point $x_0$, step sizes $\{\alpha_t\}_{t=1}^T$, first moment decay $\beta_1$, second moment decay $\beta_2$, regularization constant $\epsilon$
\FOR{$t = 1$ \TO $T$}
  \STATE $g_t = \nabla f_t(x_{t-1})$
  \STATE $\hat{\beta_1}_t = \beta_1 \frac{1 - \beta_1^{t-1}}{1 - \beta_1^t}$
  \STATE $\hat{\beta_2}_t = \beta_2 \frac{1 - \beta_2^{t-1}}{1 - \beta_2^t}$
  \STATE $\hat{m}_t = \hat{\beta_1}_t \hat{m}_{t-1} + (1 - \hat{\beta_1}_t) g_t$
  \STATE $\hat{v}_t = \hat{\beta_2}_t \hat{v}_{t-1} + (1 - \hat{\beta_2}_t) g_t^2$
  \STATE $x_t = x_{t-1} - \alpha_t \hat{m}_t / (\sqrt{\hat{v}_t} + \epsilon)$
\ENDFOR
\end{algorithmic}
\caption{Equivalent formulation of Adam where bias adjustments have been replaced by decay-rate adjustments.}
\label{alg:adam_equivalent}
\end{algorithm}


In our reformulation of Adam, the corrected decay parameter $\hat{\beta_2}_t = \beta_2 \frac{1 - \beta_2^{t-1}}{1 - \beta_2^t}$ starts at $0$ when $t=1$ and asymptotically approaches $\beta_2$ for large values of $t$.  

\subsection{Proposed Alternative}

Alternatively, we propose the family of schedules $$\hat{\beta_2}_t = 1 - \frac{1}{t^c}, \quad t \ge 1$$ parameterized by a scalar $c > 0$ controlling the rate of increase.

By inspection, it is clear that this schedule starts at 0 for $t = 1$ and increases toward 1 as $t$ tends to $\infty$. This allows us to benefit from the stability properties of a low $\hat{\beta_2}_t$ at the start of training while still realizing the gains in performance due to a high $\hat{\beta_2}_t$ as the run progresses.

Less obviously, this schedule also eliminates the need for bias correction. To see why, we begin by expanding the recursive definition of $v_t$ to arrive at $$v_t = \sum_{i=1}^t (1 - \hat{\beta_2}_i) \prod_{j=i+1}^t \hat{\beta_2}_j g_i^2.$$ Taking expectations of both sides, we have
\begin{align*}
\mathbb{E}[v_t]
& = \mathbb{E} \left[ \sum_{i=1}^t (1 - \hat{\beta_2}_i) \prod_{j=i+1}^t \hat{\beta_2}_j g_i^2 \right] \\
& = \sum_{i=1}^t (1 - \hat{\beta_2}_i) \prod_{j=i+1}^t \hat{\beta_2}_j \mathbb{E}[g_i^2] \\
& = \sum_{i=1}^t (1 - \hat{\beta_2}_i) \prod_{j=i+1}^t \hat{\beta_2}_j \mathbb{E}[g_t^2] \\
& \qquad + \sum_{i=1}^t (1 - \hat{\beta_2}_i) \prod_{j=i+1}^t \hat{\beta_2}_j (\mathbb{E}[g_i^2] - \mathbb{E}[g_t^2]).
\end{align*}
We would like the expected moving average $\mathbb{E}[v_t]$ to be as close as possible to the true second moment $\mathbb{E}[g_t^2]$. If we assume as in \citet{Kingma14Adam} that the gradient distribution is stationary or that the errors $\mathbb{E}[g_i^2] - \mathbb{E}[g_t^2]$ are sufficiently small, then it suffices to check that our proposed decay schedule satisfies $$\sum_{i=1}^t (1 - \hat{\beta_2}_i) \prod_{j=i+1}^t \hat{\beta_2}_j = 1,$$ since this would imply $\mathbb{E}[v_t]$ and $\mathbb{E}[g_t^2]$ are equal in the stationary case or equal up to a small error term otherwise. We will also require that for all $i \ge 1$, $$\lim_{t \to \infty} (1 - \hat{\beta_2}_i) \prod_{j=i+1}^t \hat{\beta_2}_j = 0,$$ which means that the contributions of past gradients will go to 0 as training progresses rather than retaining nontrivial weight for all time.

We verify the first property with a simple induction argument. At time $t=1$, we have $1 - \hat{\beta_2}_1 = 1$ as desired. Then if the equality holds at time $t-1$, we have
\begin{align*}
& \sum_{i=1}^t (1 - \hat{\beta_2}_i) \prod_{j=i+1}^t \hat{\beta_2}_j \\
= \; & \hat{\beta_2}_t \sum_{i=1}^{t-1} (1 - \hat{\beta_2}_i) \prod_{j=i+1}^{t-1} \hat{\beta_2}_j + (1 - \hat{\beta_2}_t) \\
= \; & \hat{\beta_2}_t + (1 - \hat{\beta_2}_t) = 1,
\end{align*}
which completes the argument. We remark that this proof in fact goes through for any schedule for which $\hat{\beta_2}_1 = 0$.

The second condition is more restrictive in comparison. For the proposed schedule, we would like it to be true that
\begin{align*}
& \lim_{t \to \infty} \left( 1 - \left( 1 - \frac{1}{i^c} \right) \right) \prod_{j=i+1}^t \left( 1 - \frac{1}{j^c} \right) \\
& = \frac{1}{i^c} \left( \prod_{j=2}^i \left( 1 - \frac{1}{j^c} \right) \right)^{-1} \lim_{t \to \infty} \prod_{j=2}^t \left( 1 - \frac{1}{j^c} \right) = 0
\end{align*}
for all $i \ge 1$. Using the standard result that for a sequence $0 \le a_n < 1$, the infinite product $\prod_n (1 - a_n)$ converges to a nonzero value iff the series $\sum_n a_n$ converges, we see that the limit above will be 0 iff the series $\sum_{j=2}^\infty 1/j^c$ diverges, which is only true for $c \le 1$. Hence the decay parameter must not increase too fast, as otherwise past gradients will maintain a weight bounded away from 0 for the full duration of training. In the special case where $c=1$, we note that $v_t = \sum_{i=1}^t g_i^2 / t$ reduces to a simple arithmetic moving average of the history of squared gradients.

\begin{table*}[t]
\caption{BLEU scores for Transformer on WMT '14 En $\rightarrow$ De translation task (higher is better).  Each optimization scheme was tested with and without a warmup period.  For the tests with warmup, $s_t = \min(10^{-6}\cdot t, \frac{1}{\sqrt{t}})$.  For the tests without warmup, $s_t = \min(10^{-2}, \frac{1}{\sqrt{t}})$.}  
\label{tab:results}
\begin{center}
\vspace{0.15in}
\scalebox{1.0}{
\begin{tabular}{cccccc|cc}
&   Factored  &    &  & Update & (Relative) & BLEU & BLEU \\
&    Second-Moment &   $\hat{\beta_1}_t$ & $\hat{\beta_2}_t$  & Clipping & Step& with warmup & no warmup \\
&    Estimation &          &                    &     $d$  & Size      &    &  \\
\hline \hline

(A)&   & 0   & $\beta_2=0.999$ & & \multirow{2}{*}{$\alpha_t=0.1 \cdot s_t$}  & 25.6& 0.1 \\
(B)&  & $0.9$ & $\beta_2=0.999$        & & & 25.4 & 23.1 \\
\hline
(C)& yes             & 0              & $\beta_2=0.999$ &  &  \multirow{3}{*}{$\alpha_t=0.1 \cdot s_t$} & 25.4 & 0.2 \\
(D)& use row-mean    & 0              & $\beta_2=0.999$ & & & 25.2 &  0.3 \\
(E)& use col-mean & 0              & $\beta_2=0.999$ & & & 0.3 & 0.5 \\
\hline
(F) &  & 0     & $\beta_2=0.99$   &  & \multirow{2}{*}{$\alpha_t=0.1 \cdot s_t$} & 25.0 & 0.4 \\
(G)&                & 0             & $\beta_2=0.9$  & & & 18.4 & 15.6 \\
\hline
(H)& & 0              & $\beta_2=0.999$ & $1.0$ & & 25.4 & 21.5 \\
(I)&                & 0              & $\beta_2=0.999$ & $2.0$ & & 25.7 & 0.2 \\
(J)& yes            & 0              & $\beta_2=0.999$ & $1.0$ & & 25.6 & 22.4 \\
\hline
(K)&             & 0              & $1-t^{-0.5}$ &  &\multirow{4}{*}{$\alpha_t=0.1 \cdot s_t$}  & 25.6 & 21.1 \\
(L)&             & 0              & $1-t^{-0.8}$ &  & & 25.6 & 0.1 \\
(M)&             & 0              & $1-t^{-1.0}$ &  & & 25.4 & 0.1 \\
(N)&             & 0              & $1-t^{-0.8}$ & $1.0$ & & \textbf{25.9} & 22.4 \\
\hline
(O)&yes & 0 & $1-t^{-0.8}$ & $1.0$ & \multirow{2}{*}{$\rho_t=s_t$} & 25.0 & 25.5 \\
(P)&yes & 0.9 & $1-t^{-0.8}$ & $1.0$ & & 24.9 & 25.3 \\
\hline
\multirow{6}{*}{(Q)} & \multicolumn{4}{c}{SGD} & $lr=1 \cdot s_t$  & 0.6 & 0.8\\
& \multicolumn{4}{c}{SGD} &$lr=10 \cdot s_t$ & 8.2 & 9.1\\
& \multicolumn{4}{c}{SGD} &$lr=100 \cdot s_t$ & 22.9 & diverged \\
& \multicolumn{4}{c}{SGD} &$lr=150 \cdot s_t$ & 24.0 & diverged \\
& \multicolumn{4}{c}{SGD} &$lr=200 \cdot s_t$ & 24.3 & diverged \\
& \multicolumn{4}{c}{SGD} &$lr=300 \cdot s_t$ & diverged & diverged\\
&\end{tabular}
\vspace{0.1in}
}
\end{center}
\end{table*}

\subsection{Experiments}
We added this alternative to our experimental baseline -- see Table \ref{tab:results} lines (A) vs.\ (K), (L), (M).  The schedule $\hat{\beta_2}_t=1-t^{-0.5}$ did in fact maintain both stability and convergence.  When combined with update clipping, this method produced similar results to constant high $\beta_2$ with update clipping -- see Table \ref{tab:results} lines (H) vs.\ (N).

\section{Relative Step Size}

Instead of defining the optimization algorithm in terms of absolute step sizes $\{\alpha_t\}_{t=1}^T$, we propose defining the optimization algorithm in terms of relative step sizes $\{\rho_t\}_{t=1}^T$, which get multiplied by the scale of the parameters.  We define the scale of a parameter vector or matrix as the root-mean-square of its components, lower-bounded by a small constant $\epsilon_2$.  The reason for this lower bound is to allow zero-initialized parameters to escape 0.   Combining this with the other proposals in this paper gives the Adafactor algorithm defined in Algorithms \ref{alg:adafactor-matrix} and \ref{alg:adafactor-vector}.  Proposed hyperparameters for Adafactor are listed in Algorithm \ref{alg:adafactor-hp}.  

\begin{algorithm}[t]
\begin{algorithmic}[1]
\STATE \textbf{Inputs:} initial point $X_0 \in \mathbb{R}^{n \times m}$, relative step sizes $\{\rho_t\}_{t=1}^T$, second moment decay $\{\hat{\beta_2}_t\}_{t=1}^T$ such that $\hat{\beta_2}_1=0$, regularization constants $\epsilon_1$ and $\epsilon_2$, clipping threshold $d$

\FOR{$t = 1$ \TO $T$}
  \STATE $\alpha_t = \max\left(\epsilon_2, \mathrm{RMS}(X_{t-1})\right) \rho_t$
  \STATE $G_t = \nabla f_t(X_{t-1})$
  \STATE $R_t = \hat{\beta_2}_t R_{t-1} + (1 - \hat{\beta_2}_t) (G_t^2 + \epsilon_1 1_n 1_m^\top) 1_m$
  \STATE $C_t = \hat{\beta_2}_t C_{t-1} + (1 - \hat{\beta_2}_t) 1_n^\top (G_t^2 + \epsilon_1 1_n 1_m^\top)$
  \STATE $\hat{V}_t = R_t C_t / 1_n^\top R_t$
  \STATE $U_t = G_t / \sqrt{\hat{V}_t}$
  \STATE $\hat{U}_t = U_t / \max\left(1, \mathrm{RMS}(U_t)/d\right)$
  \STATE $X_t = X_{t-1} - \alpha_t \hat{U}_t$
\ENDFOR
\end{algorithmic}
\caption{Adafactor for weight matrices.}
\label{alg:adafactor-matrix}
\end{algorithm}

\begin{algorithm}[t]
\begin{algorithmic}[1]
\STATE \textbf{Inputs:} initial point $X_0 \in \mathbb{R}^{n}$, relative step sizes $\{\rho_t\}_{t=1}^T$, second moment decay $\{\hat{\beta_2}_t\}_{t=1}^T$ such that $\hat{\beta_2}_1=0$, regularization constants $\epsilon_1$ and $\epsilon_2$, clipping threshold $d$
\FOR{$t = 1$ \TO $T$}
  \STATE $\alpha_t = \max\left(\epsilon_2, \mathrm{RMS}(X_{t-1})\right) \rho_t$
  \STATE $G_t = \nabla f_t(X_{t-1})$
  \STATE $\hat{V}_t =  \hat{\beta_2}_t \hat{V}_{t-1} +  (1 - \hat{\beta_2}_t)  (G_t^2 + \epsilon_11_{n})$
  \STATE $U_t = G_t / \sqrt{\hat{V}_t}$
  \STATE $\hat{U}_t = U_t / \max\left(1, \mathrm{RMS}(U_t)/d\right)$
  \STATE $X_t = X_{t-1} - \alpha_t \hat{U}_t$
\ENDFOR
\end{algorithmic}
\caption{Adafactor for weight vectors.}
\label{alg:adafactor-vector}
\end{algorithm}

\begin{algorithm}[t]
\begin{algorithmic}[1]
\STATE $\epsilon_1 = 10^{-30}$  \\ 
\STATE $\epsilon_2 = 10^{-3}$ \\
\STATE $d = 1$ \\
\STATE $\rho_t = \min\left(10^{-2}, \frac{1}{\sqrt{t}}\right)$ \\
\STATE $\hat{\beta_2}_t = 1-t^{-0.8}$ \\
\end{algorithmic}
\caption{Proposed hyperparameters for Adafactor}
\label{alg:adafactor-hp}
\end{algorithm}


\subsection{Experiments}
\label{relative_step_size_experiments}

To examine the potential benefit of relative step size,  we use a version of Transformer \cite{Vaswani17Attention} where the token-embedding parameters are not reused in the softmax layer.  The authors cleverly initialize the embedding parameters with standard deviation $\frac{1}{\sqrt{d_\mathrm{model}}}$, similarly to the other parameters, and then scale them up in the computation by a factor of $\sqrt{d_\mathrm{model}}$ so that the embeddings start out with unit norm.  This allows the same absolute step size to work for both the embedding parameters and the other weight matrices in the model.  We test Adam and Adafactor with this ``clever'' embedding scheme, but also with two more naive schemes.   In the first, we initialize the embedding parameters with standard deviation $1$ and do not scale them in the computation.  In the second, we initialize the embedding parameters with standard deviation $\frac{1}{\sqrt{d_\mathrm{model}}}$, and do not scale them in the computation.  For the Adam experiments, we use the hyperparameters and step size scheme from \citet{Vaswani17Attention}.  For the Adafactor experiments, we use our recommended hyperparameters listed in Algorithm \ref{alg:adafactor-hp}.  All models are trained for 50,000 steps with batch size 16,384 tokens (unlike the other experiments in this paper).  Results are given in Table \ref{tab:relative}.  Adafactor proves more resilient to the more naive parameter initialization and scaling schemes.

\begin{table}[t]
\caption{Relative step sizes are more resilient to differently-scaled embedding parameters.}
\label{tab:relative}
\vskip 0.15in
\begin{center}
\begin{small}
\begin{tabular}{cc|cc}
\toprule
Emb init.& Multiplier & BLEU & BLEU \\
$\sigma$ &   & (Adam) & (Adafactor) \\
\midrule
$\frac{1}{\sqrt{d_\mathrm{model}}}$  & $\sqrt{d_{model}}$ & 26.4 & \textbf{26.6} \\
$1$ & $1$ & 25.8 & 26.4 \\
$\frac{1}{\sqrt{d_\mathrm{model}}}$  & $1$  & 24.2 & 25.4 \\
\bottomrule
\end{tabular}
\end{small}
\end{center}
\vskip -0.1in
\end{table}





\section{Experimental Setup}
\label{sec:experiments}

We evaluated the optimization algorithms described in this paper on the Transformer machine translation model described in \citet{Vaswani17Attention} on the same WMT 2014 English-to-German translation task described in that paper, using the latest version of the architecture from the Tensor2Tensor open-source repository. 

Models were trained for 100,000 steps.  Each training batch contained sentence pairs containing approximately 4,096 tokens in the input and 4,096 tokens in the target sentences.  These batches are about 8 times smaller than the ones used by \citet{Vaswani17Attention}.  This causes our results to be slightly worse, but significantly speeds up training times (less than two hours each on one Google TPU v2). 

In one set of experiments, we followed a similar step size schedule as \citet{Vaswani17Attention} consisting of a linear warmup followed by inverse-square root decay, given by $\alpha_t=0.1\cdot \min(10^{-6}\cdot t, \frac{1}{\sqrt{t}})$.  In order to test training stability, we ran a second set of experiments where the initial warmup was replaced by a flat learning rate: $\alpha_t=0.1\cdot \min(10^{-2}, \frac{1}{\sqrt{t}})$.  For the experiments with relative step sizes, we used schedules $\rho_t=\min(10^{-6}\cdot t, \frac{1}{\sqrt{t}})$ and $\rho_t=\min(10^{-2}, \frac{1}{\sqrt{t}})$.

In addition, we tested plain SGD with learning rate schemes equal to the step size schemes above, multiplied by various constants, since SGD also requires little (zero) additional memory cost.

\subsection{Results}
Results are listed in Table \ref{tab:results}.  The listed BLEU scores are on the development set, newstest2013, using beam search with beam size 4 and length penalty $\alpha=0.6$.  Higher scores are better.  Note that the scores listed should not be compared to those in \citet{Vaswani17Attention}, due to both our shorter training regime and various improvements in the open-source version of the model over the published version.

The schemes with warmup mostly achieved very similar results.  Fast decay of the second-moment estimator (G) was significantly worse.

Without warmup, the baseline (A) becomes unstable.  The instability is relieved by any of momentum (B), fast decay (G), variable decay (K), and gradient clipping (H).   It is not clear whether relative step size has an affect on stability, since the step sizes used in the experiments are not directly comparable. 

Rows (J) and (N) demonstrate algorithms with sub-linear additional memory requirements which attain comparable convergence and stability results to Adam with momentum.

Results for SGD (Q) were poorer and less stable than Adam, and highly dependent on choice of learning rate.

\section{Conclusion}

On a popular machine translation task, we have demonstrated similar quality results to Adam, using a sublinear amount of extra space for accumulators.  This should enable training of significantly larger models on the same memory-constrained hardware.  We have also introduced update clipping, a potentially more-generally-useful technique for stabilizing adaptive gradient methods.

Code for running Adafactor is available in the open-source Tensor2Tensor library.

\section*{Acknowledgements}  Thanks to \L{}ukasz Kaiser, the Tensor2Tensor team and~the open-source community for helping test and debug Adafactor.  Also thanks to Geoffrey Hinton, who asserted that training works well if the magnitudes of parameter updates are about $10^{-2}$ to $10^{-3}$ times the magnitude of the parameters.

\bibliography{main}
\bibliographystyle{icml2018}

\end{document}